\documentclass{article}

\usepackage[preprint]{neurips_2026}

\usepackage[utf8]{inputenc} 
\usepackage[T1]{fontenc}    
\usepackage{hyperref}       
\usepackage{url}            
\usepackage{booktabs}       
\usepackage{amsfonts}       
\usepackage{nicefrac}       
\usepackage{microtype}      
\usepackage{xcolor}         
\usepackage{multirow}
\usepackage{subcaption}
\usepackage{graphicx}
\usepackage{hyperref}
\usepackage{colortbl}
\usepackage{amsmath} 
\usepackage{amssymb}
\usepackage{tabularx}
\usepackage{amsthm}
\usepackage{mathtools}
\usepackage{algorithm}
\usepackage{algorithmic}

\newtheorem{theorem}{Theorem}

\title{TAD: Temporal-Aware Trajectory Self-Distillation for Fast and Accurate  Diffusion LLM}

\author{
  \textbf{Haoyang Zhou}\textsuperscript{1} \quad
  \textbf{Li Kong}\textsuperscript{1} \quad
  \textbf{Shijie Ren}\textsuperscript{1} \quad
  \textbf{Xiting Wang}\textsuperscript{1,*} \quad \\
  \textbf{Shuang Liang}\textsuperscript{1} \quad
  \textbf{Guowei Wang}\textsuperscript{2} \quad
  \textbf{Zhenxuan Pan}\textsuperscript{2}
}

\begin{document}

\maketitle
\vspace*{-3em}
\begin{center}
\textsuperscript{1}\text{Gaoling School of Artificial Intelligence, Renmin University of China} \\
\textsuperscript{2}\text{Ant Group} \\
\textsuperscript{*}\text{Corresponding author}
\end{center}

\begin{abstract}
Diffusion large language models (dLLMs) offer a promising paradigm for parallel text generation, but in practice they face an accuracy-parallelism trade-off, where increasing tokens per forward (TPF) often degrades generation quality. Existing acceleration methods often gain speed at the cost of accuracy. To address this limitation, we propose TAD, a Temporal-Aware trajectory self-Distillation framework. During data construction, we condition a teacher model on both the prompt and the ground-truth response to generate decoding trajectories, recording the intermediate masked states throughout the process. Based on how many decoding steps remain before each masked token is revealed, we partition masked positions into near and distant subsets. For near tokens, we train the student with a hard cross-entropy loss using the teacher trajectory tokens as labels, encouraging confident predictions for tokens that are about to be decoded. For distant tokens, we apply a soft KL divergence loss between the teacher and student token distributions, providing softer supervision and preserving future planning knowledge. This temporal-aware partition naturally gives rise to two deployment configurations: a Quality model that prioritizes accuracy and a Speed model that favors more aggressive acceleration. Experiments show that TAD consistently improves the accuracy-parallelism trade-off. On LLaDA, it raises average accuracy from 46.2\% to 51.6\% with the Quality model and average AUP from 46.2 to 257.1 with the Speed model. Our code is available at: \href{https://github.com/BHmingyang/TAD}{https://github.com/BHmingyang/TAD}.

\end{abstract}

\section{Introduction} 
Diffusion large language models (dLLMs)~\cite{austin2021structured,lou2024discrete,sahoo2024simple,shi2024simplified,nie2025large} have recently emerged as a promising alternative to Autoregressive (AR) language models. Unlike AR models that generate tokens strictly from left to right, dLLMs inherently support bidirectional attention and parallel generation of multiple tokens. Despite this theoretical potential, achieving high parallelism in practice remains a challenge~\cite{kang2025parallelbench}. In each forward pass, dLLMs predict masked tokens independently, ignoring the sequential dependencies of natural language~\cite{wu2025fast}. This mismatch leads to a substantial decline in generation quality when decoding multiple tokens simultaneously. To preserve robust performance, existing open-source dLLMs~\cite{nie2025large,ye2025dream} often apply conservative decoding schedules requiring hundreds of denoising steps, which exposes a critical gap between potential parallelism and realized throughput.

\begin{figure}[t]
    \centering
    \includegraphics[width=1\textwidth]{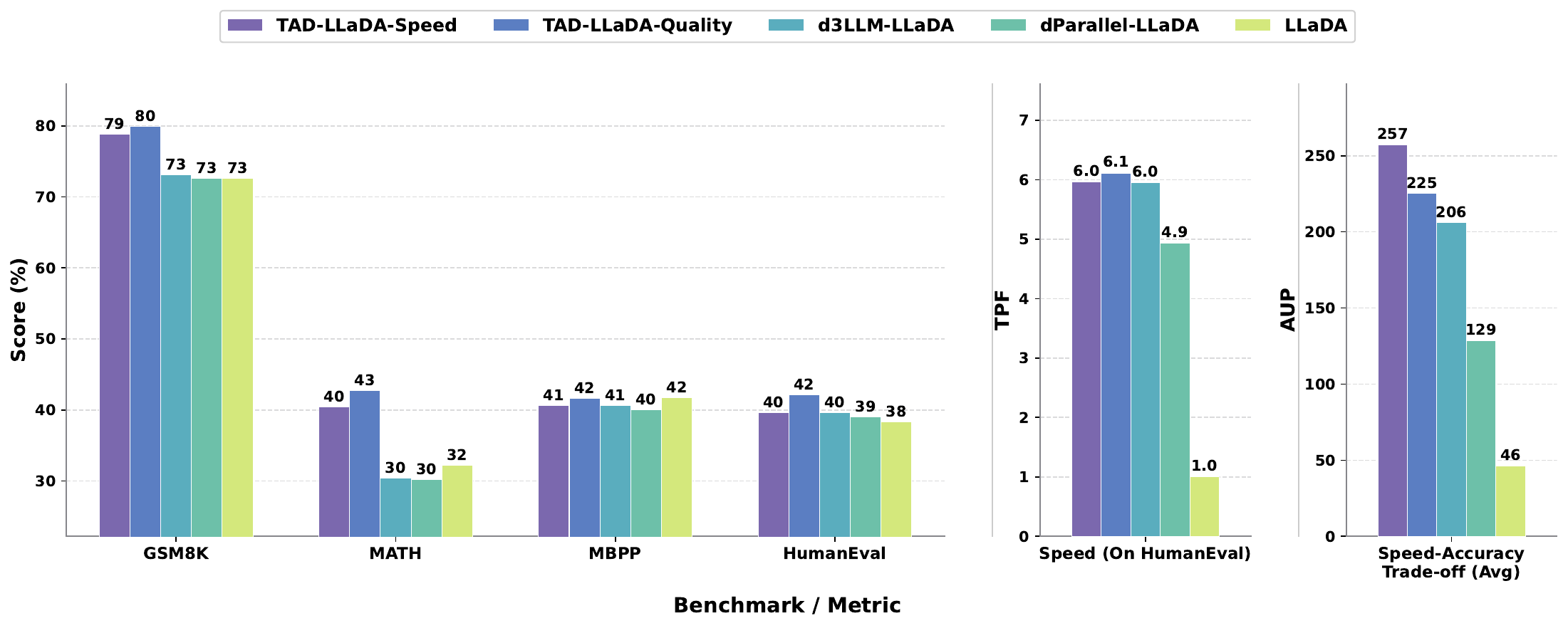}
    \caption{\textbf{TAD} achieves competitive accuracy on math and code benchmarks, significantly improving the Accuracy under Parallelism (AUP)~\cite{qian2026d3llm} compared to baselines like d3LLM~\cite{qian2026d3llm} and dParallel~\cite{chen2025dparallel}.}
    \label{fig:acc_aup_dual}
\end{figure}

To narrow this gap, recent studies have explored both training-free~\cite{hong2025wide,wu2025fast,xu2025lopa,mohamed2025fast} and training-based methods~\cite{chen2025dparallel,zhang2026t3d,qian2026d3llm,wang2025diffusion} for few-step parallel decoding. Training-free methods offer plug-and-play acceleration, but their effectiveness is bounded by the capacity of the base model and often suffer quality degradation under aggressive decoding~\cite{lin2026efficient}. Training-based methods can push parallelism further by finetuning the model for fast decoding, typically through trajectory distillation, as in dParallel~\cite{chen2025dparallel} and d3LLM~\cite{qian2026d3llm}. However, these gains often come at the cost of generation quality due to two key limitations. First, standard trajectory collection conditions solely on the prompt, meaning the quality of the generated trajectories is strictly upper-bounded by the base model's inherent reasoning capacity. This self-generation bottleneck prevents the model from learning knowledge beyond its current limits. Second, existing methods do not fully use the temporal information of decoding: they either apply the cross-entropy loss to all masked tokens~\cite{qian2026d3llm}, which can force early decisions on inherently uncertain positions, or ignore supervision on distant tokens~\cite{chen2025dparallel}, which removes useful signals about future content.

Motivated by these limitations, we propose TAD, a Temporal-Aware trajectory self-Distillation framework for dLLMs. To improve trajectory quality, we adapt the privileged-information strategy~\cite{zhao2026self,hubotter2026reinforcement,shenfeld2026self} to dLLMs in which the teacher and student share the same backbone, while the teacher receives the ground-truth response as additional privileged information. This design enables the teacher to generate high-quality decoding trajectories, from which we extract the intermediate masked states for subsequent training. To better utilize the temporal structure of decoding, we partition masked positions by how many decoding steps remain before they are revealed. We apply cross-entropy to near tokens using teacher trajectory tokens as labels, encouraging confident predictions for tokens that will be decoded soon. For distant tokens, we use a KL divergence objective to align the student with the teacher distribution, providing a smoothed learning signal that avoids overfitting. This temporal-aware design yields two flexible deployment configurations: a Quality model that prioritizes accuracy and a Speed model that favors more aggressive acceleration.

We evaluate TAD on math and code generation tasks using LLaDA~\cite{nie2025large} and Dream~\cite{ye2025dream}. Comprehensive experiments show that our approach effectively improves the accuracy-parallelism trade-off. For instance, when applied to LLaDA~(Fig.~\ref{fig:acc_aup_dual}), TAD-LLaDA-Speed achieves a $6\times$ speedup while improving the accuracy from 38\% to 42\% on HumanEval~\cite{chen2021evaluating}. Furthermore, across all evaluated tasks, TAD achieves the best average accuracy under parallelism (AUP) scores.

Our main contributions are as follows:
\begin{itemize}
\item We adapt the privileged-information strategy to dLLMs by leveraging ground-truth as privileged information to generate high-quality trajectories for distillation.
\item We propose a temporal-aware trajectory self-distillation framework to partition masked positions based on their decoding steps, applying cross-entropy to near tokens to maximize throughput and soft KL divergence to distant tokens to preserve sequence dependencies.
\item We conduct extensive experiments and show that TAD improves the accuracy-parallelism trade-off, surpassing the average accuracy of the base models while achieving the highest Accuracy Under Parallelism (AUP) scores.
\end{itemize}

\section{Preliminaries}
\textbf{Masked Diffusion Language Models (MDLMs).} 
MDLMs~\cite{austin2021structured,shi2024simplified,nie2025large} formulate text generation as a probabilistic process comprising forward corruption and reverse denoising. Given a clean token sequence $\mathbf{x_0}=(x_0^1,\ldots,x_0^L)$, the forward process constructs intermediate states by replacing tokens with the special token \texttt{[MASK]}. For a corruption level $t \sim \mathcal{U}(0,1)$, the sequence $\mathbf{x}_t$ is partially masked with each position being masked with probability $t$. The conditional distribution for $\mathbf{x}_t$ is:

\begin{equation}
q(\mathbf{x}_t \mid \mathbf{x}_0) = \prod_{i=1}^{L} q(x_t^i \mid x_0^i), \quad \text{where } q(x_t^i \mid x_0^i) = \begin{cases} 1-t, & \text{if } x_t^i = x_0^i \\ t, & \text{if } x_t^i = \texttt{[MASK]} \end{cases}.
\end{equation}

The reverse denoising process reconstructs the clean sequence by employing a parameterized model $p_\theta$ that estimates the conditional distribution of the masked tokens given $\mathbf{x}_t$. This enables the parallel prediction of all masked positions:

\begin{equation}
\label{eq:factorized}
p_\theta(\mathbf{x}_0 \mid \mathbf{x}_t) = \prod_{i: x_t^i = \texttt{[MASK]}} p_\theta(x_0^i \mid \mathbf{x}_t).
\end{equation}
The model is optimized by minimizing the negative log-likelihood over these masked tokens~\cite{nie2025large}:
\begin{equation}
\label{eq:objectives}
\mathcal{L}(\theta) = -\mathbb{E}_{t \sim \mathcal{U}(0,1),\; \mathbf{x}_0 \sim p_{\text{data}},\; \mathbf{x}_t \sim q(\cdot \mid \mathbf{x}_0, t)} \left[ \frac{1}{t} \sum_i \mathbf{1}[x_t^i = \texttt{[MASK]}] \log p_\theta(x_0^i \mid \mathbf{x}_t) \right].
\end{equation}
Here, $\mathbf{1}[\cdot]$ denotes the indicator function, which ensures the loss is computed only for masked tokens.

\textbf{The Factorization Gap in Parallel Decoding.}
Parallel decoding relies on the independence assumption in Equation~\ref{eq:factorized}. However, the true posterior distribution rarely factorizes. Let $\mathcal{M}(\mathbf{x}_t)$ denote the set of $K$ masked positions and $\mathbf{x}_0^\mathcal{M}$ denote the corresponding clean tokens. Using the chain rule of probability for an arbitrary decoding order $(m_1, \ldots, m_K)$, the true joint distribution is:
\begin{equation}
\label{eq:joint_distribution}
p^{\ast}(\mathbf{x}_0^\mathcal{M} \mid \mathbf{x}_t) = \prod_{k=1}^{K} p^{\ast}(x_0^{m_k} \mid \mathbf{x}_t, x_0^{m_1}, \ldots, x_0^{m_{k-1}}).
\end{equation}
This equation captures the sequential dependencies among tokens. Since the training objective in Equation~\ref{eq:objectives} optimizes each position individually, the resulting model ignores these dependencies. This mismatch can be formulated as the factorization gap\cite{kang2025parallelbench}:
\begin{equation}
\label{eq:factorization_gap}
\Delta_{\text{gap}}(\mathbf{x}_t) \;=\; D_{\text{KL}}\!\left(p^{\ast}(\mathbf{x}_0^\mathcal{M} \mid \mathbf{x}_t) \;\Big\|\; \prod_{i \in \mathcal{M}} p^{\ast}(x_0^i \mid \mathbf{x}_t)\right).
\end{equation}
While the gap is manageable when predicting only a few tokens, it expands significantly during aggressive parallel decoding and causes a severe drop in generation quality~\cite{wu2025fast,kang2025parallelbench}.

\section{Method}
In this section, we present the Temporal Aware Self-Distillation (TAD) framework. We begin in Section \ref{sec:formulation} by introducing our motivation. Section \ref{sec:guided_trajectory} details a trajectory collection mechanism that utilizes ground-truth as privileged information to sample high-quality intermediate states. Finally, Section \ref{sec:self_distillation} introduces our temporal-aware distillation strategy in detail.

\begin{figure*}[t]
    \centering
    \includegraphics[width=0.95\textwidth]{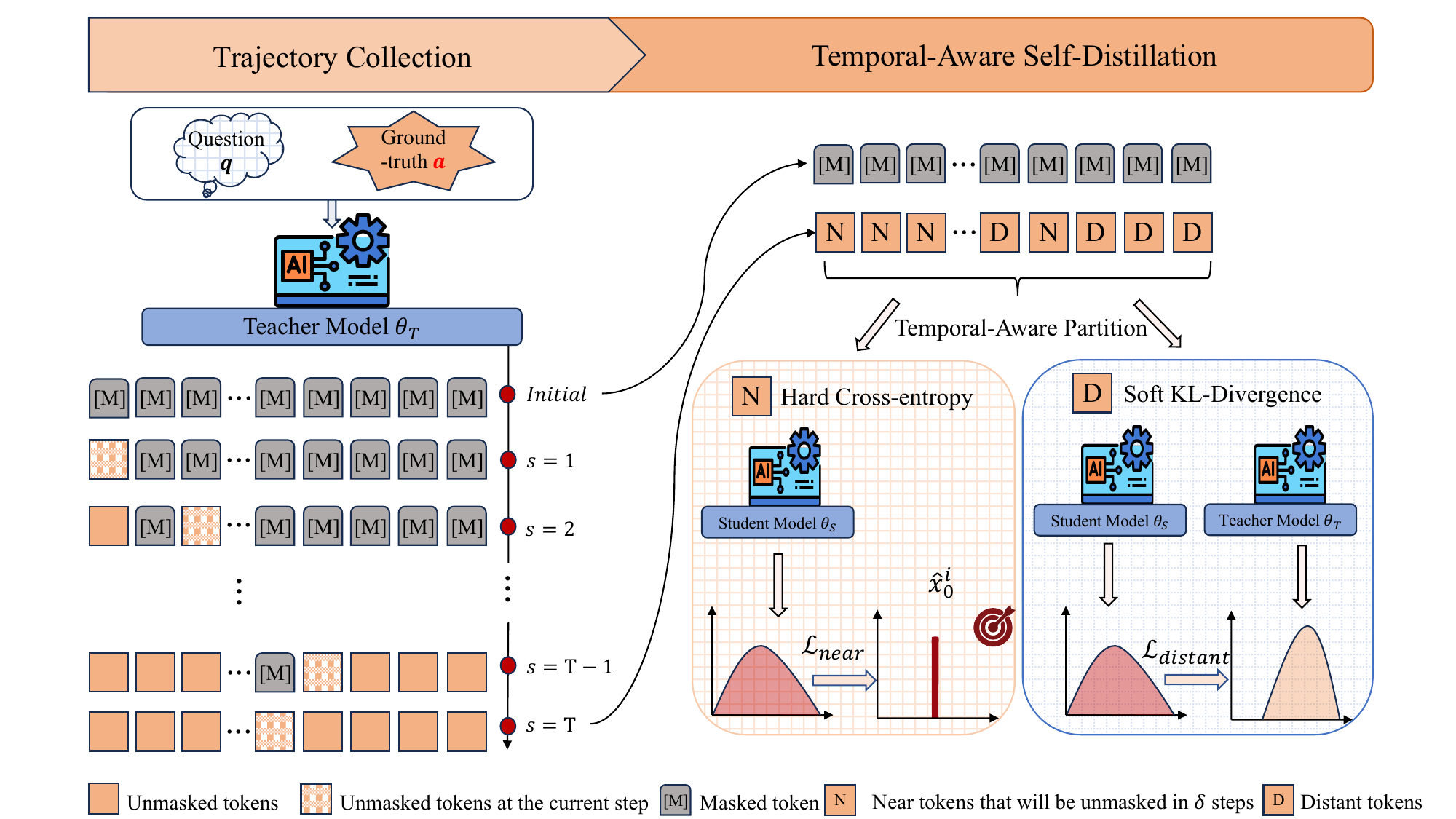}
    \caption{Overview of the TAD framework. The pipeline consists of two primary phases. \textbf{Left} (Trajectory Collection via Privileged Information): A teacher model ($\theta_T$), conditioned on the prompt $q$ and ground-truth sequence $a$, decodes exactly one token per step. \textbf{Right} (Temporal-Aware Self-Distillation): The masked positions are spatially partitioned into a near subset (N) and a distant subset (D) based on a predefined window $\delta$. The student model ($\theta_S$) is optimized using a hard cross-entropy objective on the near tokens to maximize parallel decoding throughput, and a soft KL divergence objective on the distant tokens to align with the teacher distribution.}
    \label{fig:tad_framework}
\end{figure*}

\subsection{Motivation}
\label{sec:formulation}
The factorization gap in Eq.~\ref{eq:factorization_gap} represents a fundamental limitation of parallel decoding in dLLMs. Even a perfect marginal learner still incurs $\Delta_{\text{gap}}(\mathbf{x}_t) > 0$ whenever the masked set contains contextually dependent tokens~\cite{kang2025parallelbench}. To reduce this gap, a factorized model should internalize the dependency structure into its marginals by learning from the joint distribution (Eq.~\ref{eq:joint_distribution}). We therefore consider a teacher model that performs token-by-token (TBT) decoding through a strict remasking strategy, which constructs a sequential joint distribution along a specific decoding path:
\begin{equation}
\label{eq:tbt_joint}
p_{\theta_{T}}^{TBT}(\mathbf{x}_0^{\mathcal{M}} \mid \mathbf{x}_t, q) = \prod_{k=1}^K p_{\theta_{T}}(x_0^{m_k} \mid \mathbf{x}_t, x_0^{m_1}, \ldots, x_0^{m_{k-1}}, q),
\end{equation}
where $m_k$ denotes the position index of the token unmasked at the $k$-th decoding step. Our objective is to align the student distribution $p_{\theta_{S}}(\mathbf{x}_0^{\mathcal{M}} \mid \mathbf{x}_t, q)$ with this sequential distribution by minimizing their expected divergence over a set of trajectories $\tau$:
\begin{equation}
\label{eq:theoretical_obj}
\min_{\theta_S} \mathbb{E}_{\mathbf{x}_t \sim \tau} \left[ D_{\text{KL}} \left( p_{\theta_{T}}^{TBT}(\mathbf{x}_0^{\mathcal{M}} \mid \mathbf{x}_t, q) \parallel p_{\theta_{S}}(\mathbf{x}_0^{\mathcal{M}} \mid \mathbf{x}_t, q) \right) \right].
\end{equation}

As formally proved in Appendix \ref{sec:proof}, this joint divergence mathematically simplifies to the expected sum of token-wise cross-entropies along the decoding trajectory:
\begin{equation}
\label{eq:simplified_obj}
\min_{\theta_S} \mathbb{E}_{\mathbf{x}_t \sim \tau} \sum_{k=1}^K \mathbb{E}_{x_{<k} \sim p_{\theta_T}^{TBT}} \left[ -\sum_{x_0^{m_k}} p_{\theta_T}(x_0^{m_k} \mid \mathbf{x}_t, x_{<k}, q) \log p_{\theta_S}(x_0^{m_k} \mid \mathbf{x}_t, q) \right].
\end{equation}
However, translating this theoretical objective into an effective training framework still faces two challenges.  First, standard TBT decoding conditioned solely on the prompt $q$ is strictly upper-bounded by the base model's reasoning capabilities. Constrained by this boundary, the model lacks the capacity to reliably explore correct paths in complex tasks. Second, the conditional targets in Equation~\ref{eq:simplified_obj} are often highly peaked (Fig.~\ref{fig:teacher_conf}), as the teacher relies on the preceding context $x_{<k}$. Imposing these deterministic labels across all masked positions forces the student model to guess distant tokens before establishing the necessary local context, which may cause overfitting. The TAD framework addresses these challenges through privileged information guidance and temporal partitioning (Fig.~\ref{fig:tad_framework}).

\begin{figure}[t]
    \centering
    \begin{subfigure}[b]{0.49\textwidth}
        \centering
        \includegraphics[height=4.5cm,width=\textwidth]{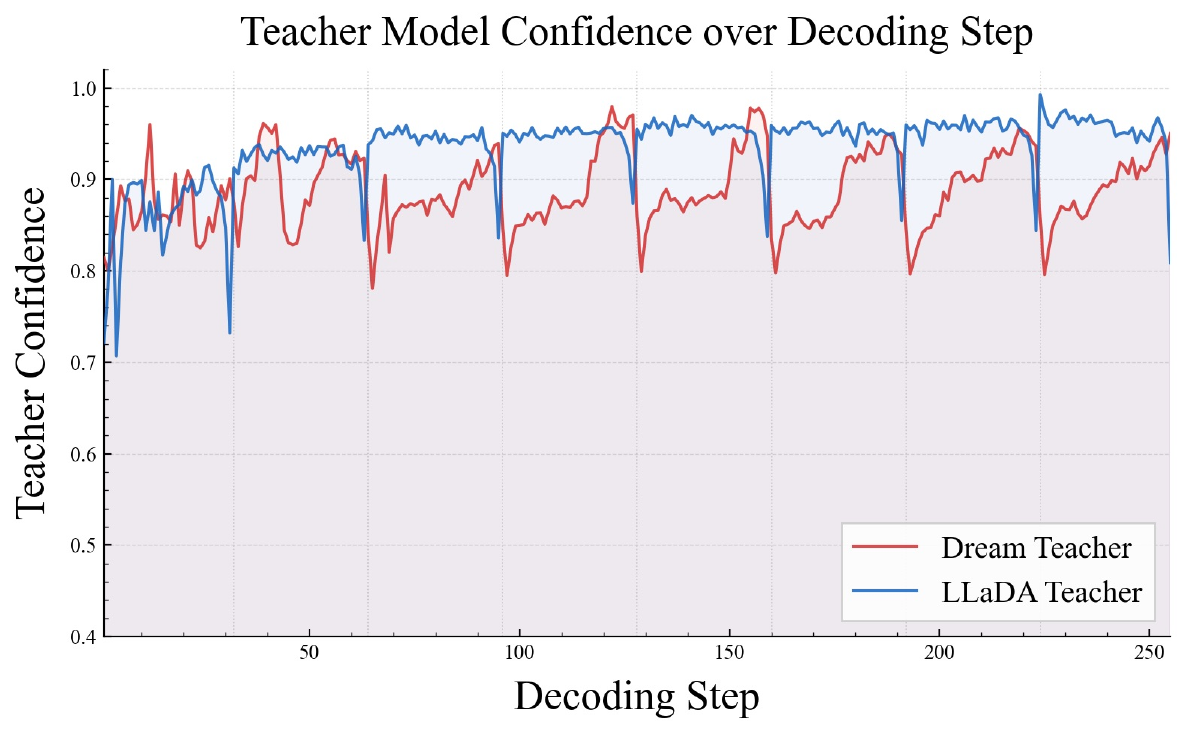}
        \caption{Teacher's per-step confidence on collected data.}
        \label{fig:teacher_conf}
    \end{subfigure}
    \hfill
    \begin{subfigure}[b]{0.49\textwidth}
        \centering
        \includegraphics[height=4.5cm,width=\textwidth]{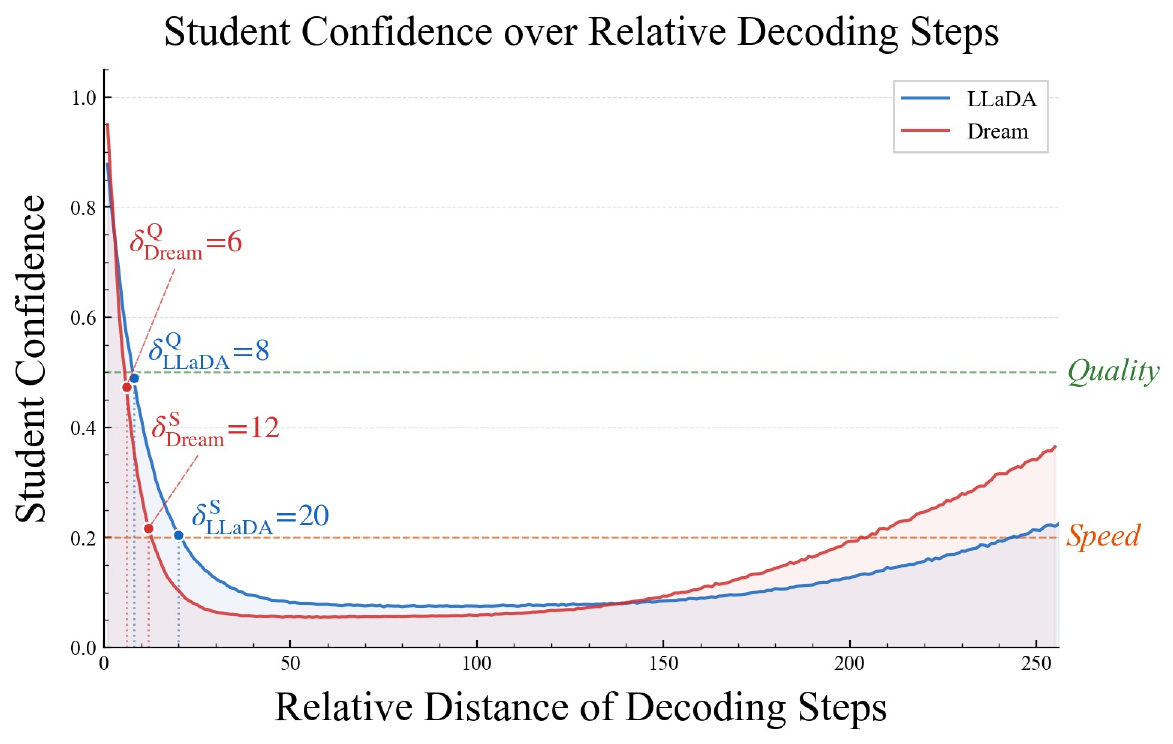}
        \caption{Student confidence on collected data.}
        \label{fig:step_diff_conf}
    \end{subfigure}
    \caption{
        \textbf{(a)} The average per-step confidence reaches 0.93 for the LLaDA teacher and 0.88 for the Dream, confirming that the conditional targets in
        Equation~\ref{eq:simplified_obj} are highly peaked.
        \textbf{(b)} The selection of the spatial partition window $\delta$ is guided by the student's confidence decay on the teacher's predicted tokens. We derive two configurations based on this decay curve: $\delta_{quality}$ is set at the relative step where the student's confidence drops to 0.5, and $\delta_{speed}$ is set where the confidence degrades to 0.2.
    }
    \label{fig:confidence_analysis}
\end{figure}

\subsection{Trajectory Collection via Privileged Information}
\label{sec:guided_trajectory}
Inspired by recent self-distillation work~\cite{zhao2026self}, we introduce a data collection mechanism guided by privileged information to address the trajectory quality challenge identified in Section~\ref{sec:formulation}. Instead of relying solely on the prompt $q$, we provide the teacher model with the ground-truth sequence $a$ as privileged information. To obtain the intermediate states of these trajectories for student optimization, we constrain the teacher to perform a discrete token-by-token rollout over $T$ decoding steps. Let $\mathbf{x}_s$ denote the intermediate masked state at decoding step $s$, where $s \in \{1, 2, \ldots, T\}$. The initial step consists entirely of the prompt and mask tokens. At each step $s$, the teacher computes the conditional probability $p_{\theta_T}(x_0^i \mid \mathbf{x}_s, q, a)$ for all currently masked positions $i \in \mathcal{M}(\mathbf{x}_s)$. To capture the sequential dependencies, we employ a strict remasking strategy that unmasks one token per step. We select the target position $m_s$ by identifying the token with the highest confidence:
\begin{equation}
m_s = \operatorname*{arg\,max}_{i \in \mathcal{M}(\mathbf{x}_s)} \max_{v} p_{\theta_T}(x_0^i = v \mid \mathbf{x}_s, q, a),
\end{equation}
where $v$ represents a token from the vocabulary. After selecting $m_s$ and determining its corresponding token $\hat{x}^{m_s} = \arg\max_v p_{\theta_T}(x_0^{m_s} = v \mid \mathbf{x}_s, q, a)$, we transition to the subsequent state $\mathbf{x}_{s+1}$ by replacing the mask at $m_s$ with $\hat{x}^{m_s}$.

We record the intermediate states and the selected tokens produced throughout this sequential procedure to construct the training trajectory $\tau_{\text{priv}} = \{ (\mathbf{x}_s, m_s, \hat{x}^{m_s}) \}_{s=1}^T$. By capturing the step-by-step token decisions guided by the ground truth, this process directly supplies the reliable conditional targets for the subsequent temporal-aware distillation phase.

\subsection{Temporal-Aware Self-Distillation}
\label{sec:self_distillation}
With the trajectories $\tau_{\text{priv}}$ collected, we implement a temporal-aware self-distillation strategy to address the optimization instability. This approach approximates the theoretical objective of conditional probability while preventing the student model from memorizing uncertain predictions. At each trajectory step $s \in \{1, 2, \ldots, T\}$, we partition the currently masked positions $\mathcal{M}(\mathbf{x}_s)$ into a near subset $\mathcal{M}_{near}$ and a distant subset $\mathcal{M}_{distant}$ by how many decoding steps remain before they are revealed. This partition relies on a predefined look-ahead window $\delta$. We define a binary mask indicator $b_s^{(i)} \in \{0, 1\}$ for each token position $i$, where $1$ represents a masked state. For any step exceeding the maximum decoding length ($s+\delta > T$), we naturally define $b_{s+\delta}^{(i)} = 0$ because the entire sequence is fully unmasked by the final step. This predefined horizon strictly divides the sequence into two mutually exclusive target subsets:
\begin{align}
    \mathcal{M}_{near} &= \{i \mid b_s^{(i)}=1 \land b_{s+\delta}^{(i)}=0\}, \\
    \mathcal{M}_{distant} &= \{i \mid b_s^{(i)}=1 \land b_{s+\delta}^{(i)}=1\}.
\end{align}
The subset $\mathcal{M}_{near}$ contains tokens about to be decoded within the temporal window, whereas $\mathcal{M}_{distant}$ contains tokens remaining masked within $\delta$ steps. 

For the near subset, we approximate the highly peaked conditional probabilities of the teacher using hard labels. Applying a cross-entropy loss with the exact tokens selected by the teacher anchors the student model to a valid decoding path. This objective drives the student to achieve high certainty on near tokens, which maximizes parallel decoding throughput. The near loss is formulated as:
\begin{equation}
\mathcal{L}_{near} = - \mathbb{E}_{\mathbf{x}_s \sim \tau_{\text{priv}}} \left[ \sum_{i \in \mathcal{M}_{near}} \log p_{\theta_{S}}(\hat{x}_{0}^i \mid \mathbf{x}_s, q) \right],
\end{equation}
where $\hat{x}_{0}^i$ denotes the reference token from the collected teacher trajectory. 

For the distant subset, we relax the strict trajectory-dependent target to prevent the student from overfitting. We utilize the single-step factorized output of the teacher $p_{\theta_{T}}(x_0^j \mid \mathbf{x}_s, q, a)$ at each position $j$ as a soft proxy. We enforce alignment by calculating the Kullback-Leibler divergence between this proxy and the prediction of the student. This soft constraint provides a smoothed learning signal without forcing deterministic predictions on uncertain tokens. The distant loss is defined as:
\begin{equation}
\mathcal{L}_{distant} = \mathbb{E}_{\mathbf{x}_s \sim \tau_{\text{priv}}} \left[ \sum_{j \in \mathcal{M}_{distant}} D_{\text{KL}} \left( p_{\theta_{T}}(x_0^j \mid \mathbf{x}_s, q, a) \parallel p_{\theta_{S}}(x_0^j \mid \mathbf{x}_s, q) \right) \right].
\end{equation}

The final objective of the TAD framework is a weighted combination of these two components:
\begin{equation}
\mathcal{L}_{TAD} = \mathcal{L}_{near} + \lambda \mathcal{L}_{distant},
\end{equation}
where $\lambda$ constitutes a balancing hyperparameter. This formulation adapts the optimization objective to the temporal dynamics of parallel generation, ensuring high parallelism through near-term certainty and robust generation quality through long-term dependency preservation.

\textbf{The Choice of the Temporal Partition Window $\delta$.}
The temporal partition window $\delta$ dictates the boundary between near subset and distant subset. To determine an appropriate value for $\delta$, we analyze how the student model's predictive confidence on the reference tokens from the collected trajectories decays relative to the decoding steps, as illustrated in Figure~\ref{fig:step_diff_conf}. The student exhibits high confidence for tokens that are immediately adjacent in the decoding steps, but this certainty degrades as the temporal distance increases. Guided by this empirical decay curve, we derive two operational configurations for the TAD framework. We establish a Quality Mode by setting $\delta_{quality}$ to the relative step where the inherent confidence of the student drops to 0.5, ensuring that hard labels are only applied where the model retains moderate natural certainty. Alternatively, we establish a Speed Mode by extending the window to $\delta_{speed}$, corresponding to the point where confidence degrades to 0.2, aggressively expanding the near subset to maximize parallel throughput at a slight cost to accuracy.

\section{Experiment}
\label{sec: experiment}
\subsection{Experimental Details}
\label{subsec: experiment details}
\textbf{Training Dataset.} As a trajectory-distillation method, we use prompts and ground-truth answers from public datasets, and let the model generate its own responses to construct the training data. For LLaDA-8B-Instruct~\cite{nie2025large}, we sample prompts and ground-truth answers from the training splits of GSM8K~\cite{cobbe2021gsm8k}, PRM12K~\cite{lightman2023let} and a subset of KodCode~\cite{xu2025kodcode}. We generate target trajectories with a sequence length of 256 and a block length of 32. We adopt a low-confidence remasking strategy, in which the model generates only one token at each step. We record the intermediate state at every decoding step. We then filter out a number of incorrect trajectories, most of which are caused by formatting errors, and obtain approximately 26k training samples. For Dream-7B-Instruct~\cite{ye2025dream}, we use the same trajectory collection strategy and obtain approximately 22k training samples.

\textbf{Training Details.} In all experiments, we fine-tune the models using LoRA~\cite{hu2022lora} with a rank of $r=128$ and a scaling factor of $\alpha=128$. We configure the spatial partition window $\delta$ according to the selected operational mode and the base architecture. Specifically, for the Quality model, we set $\delta=8$ for LLaDA and $\delta=6$ for Dream. For the Speed model, we set $\delta=20$ and $\delta=12$ for LLaDA and Dream, respectively. All training processes are executed on 8 NVIDIA DGX H200 GPUs. Additional configuration details are available in the Appendix \ref{sec:implementation_details}.

\textbf{Evaluation Details.} We evaluate our model on four widely used benchmarks: GSM8K~\cite{cobbe2021gsm8k}, MATH~\cite{hendrycksmath2021}, HumanEval~\cite{chen2021evaluating}, and MBPP~\cite{austin2021program}. Following prior work~\cite{qian2026d3llm}, we apply a 4-shot setting for MATH and a 3-shot setting for LLaDA on MBPP, while using a 0-shot setting for the remaining evaluations. For inference, we set the generation length to 256 and the block length to 32, adopt entropy-based dynamic decoding with a threshold of 0.5, and employ a multi-block decoding strategy consistent with D2F~\cite{wang2025diffusion} and d3LLM~\cite{qian2026d3llm}. We report accuracy, tokens per forward (TPF), and the Accuracy Under Parallelism (AUP) score~\cite{qian2026d3llm} to provide a comprehensive measurement of model performance and acceleration.

\textbf{Baselines.} Our baselines fall into three categories: (1)The original dLLMs, LLaDA-Insturct~\cite{nie2025large} and Dream-Instruct~\cite{ye2025dream}; (2)Training-free methods: Fast-dLLM~\cite{wu2025fast}; (3)Training-based methods, including Fast-dLLM-v2~\cite{wu2025fastv2}, D2F~\cite{wang2025diffusion}, dParallel~\cite{chen2025dparallel}and d3LLM~\cite{qian2026d3llm}. For a fair and consistent comparison, we directly report the baseline results disclosed in the d3LLM paper~\cite{qian2026d3llm}.

\begin{table*}[t]
\centering
\caption{Comparison of \textit{TAD-LLaDA} with other LLaDA-based models. The best results among acceleration methods are highlighted in \textbf{bold} and the second best are \underline{underlined}.}
\label{tab:main_results}
\resizebox{\textwidth}{!}{%
\setlength{\tabcolsep}{3pt} 
\begin{tabular}{l|ccc|ccc|ccc|ccc|ccc}
\toprule
 & \multicolumn{3}{c|}{\textbf{GSM8K-CoT}} & \multicolumn{3}{c|}{\textbf{MATH}} & \multicolumn{3}{c|}{\textbf{MBPP}} & \multicolumn{3}{c|}{\textbf{HumanEval}} & \multicolumn{3}{c}{\textbf{Average}} \\
\cmidrule(lr){2-4} \cmidrule(lr){5-7} \cmidrule(lr){8-10} \cmidrule(lr){11-13} \cmidrule(lr){14-16}
\textbf{Method} & Acc$\uparrow$ & TPF$\uparrow$ & AUP$\uparrow$ & Acc$\uparrow$ & TPF$\uparrow$ & AUP$\uparrow$ & Acc$\uparrow$ & TPF$\uparrow$ & AUP$\uparrow$ & Acc$\uparrow$ & TPF$\uparrow$ & AUP$\uparrow$ & Acc$\uparrow$ & TPF$\uparrow$ & AUP$\uparrow$ \\
\midrule
LLaDA & 72.6 & 1.00 & 72.6 & 32.2 & 1.00 & 32.2 & 41.7 & 1.00 & 41.7 & 38.3 & 1.00 & 38.3 & 46.2 & 1.00 & 46.2 \\
\midrule
Fast-dLLM & 74.7 & 2.77 & 182.6 & 30.8 & 1.97 & 45.7 & 38.6 & 2.13 & 56.6 & 37.8 & 2.56 & 54.0 & 45.5 & 2.36 & 84.7 \\
D2F & 73.2 & 2.88 & 189.2 & 28.7 & 2.38 & 47.2 & 38.0 & 1.94 & 53.0 & 36.6 & 2.69 & 62.0 & 44.1 & 2.47 & 87.9 \\
dParallel & 72.6 & 5.14 & 308.7 & 30.2 & 3.17 & 61.3 & 40.0 & 2.35 & 60.5 & 39.0 & 4.93 & 83.7 & 45.5 & 3.90 & 128.6 \\
d3LLM & 73.1 & \textbf{9.11} & \underline{539.2} & 30.4 & \textbf{5.74} & 100.1 & \underline{40.6} & \textbf{4.21} & \underline{88.4} & \underline{39.6} & \underline{5.95} & 96.6 & 45.9 & \textbf{6.25} & 206.1 \\
\rowcolor{gray!15}
\textbf{TAD-Q} & \textbf{79.9} & 6.32 & 503.8 & \textbf{42.7} & 4.49 & \underline{190.4} & \textbf{41.6} & 3.38 & \textbf{89.3} & \textbf{42.1} & \textbf{6.11} & \textbf{117.4} & \textbf{51.6} & 5.08 & \underline{225.2} \\

\rowcolor{gray!15}
\textbf{TAD-S} & \underline{78.8} & \underline{8.47} & \textbf{649.0} & \underline{40.4} & \underline{5.17} & \textbf{191.0} & \underline{40.6} & \underline{3.42} & 80.3 & \underline{39.6} & \underline{5.96} & \underline{108.0} & \underline{49.9} & \underline{5.76} & \textbf{257.1} \\

\bottomrule
\end{tabular}%
}
\end{table*}

\begin{table*}[t]
\centering
\caption{Comparison of \textit{TAD-Dream} with other Dream-based models. The best results among acceleration methods are highlighted in \textbf{bold} and the second best are \underline{underlined}.}
\label{tab:TAD-dream-wide}
\resizebox{\textwidth}{!}{%
\setlength{\tabcolsep}{3pt} 
\begin{tabular}{l|ccc|ccc|ccc|ccc|ccc}
\toprule
 & \multicolumn{3}{c|}{\textbf{GSM8K-CoT}} & \multicolumn{3}{c|}{\textbf{MATH}} & \multicolumn{3}{c|}{\textbf{MBPP}} & \multicolumn{3}{c|}{\textbf{HumanEval}} & \multicolumn{3}{c}{\textbf{Average}} \\
\cmidrule(lr){2-4} \cmidrule(lr){5-7} \cmidrule(lr){8-10} \cmidrule(lr){11-13} \cmidrule(lr){14-16}
\textbf{Method} & Acc$\uparrow$ & TPF$\uparrow$ & AUP$\uparrow$ & Acc$\uparrow$ & TPF$\uparrow$ & AUP$\uparrow$ & Acc$\uparrow$ & TPF$\uparrow$ & AUP$\uparrow$ & Acc$\uparrow$ & TPF$\uparrow$ & AUP$\uparrow$ & Acc$\uparrow$ & TPF$\uparrow$ & AUP$\uparrow$ \\
\midrule
Dream & 83.9 & 1.00 & 83.9 & 39.6 & 1.00 & 39.6 & 57.2 & 1.00 & 57.2 & 55.2 & 1.00 & 55.2 & 59.0 & 1.00 & 59.0 \\
\midrule
Fast-dLLM & 79.0 & 1.44 & 116.5 & 38.3 & 1.78 & 55.2 & 53.2 & 1.20 & 63.6 & 54.3 & 1.33 & 63.5 & 56.2 & 1.44 & 74.7 \\
Fast-dLLM-v2 & 77.5 & 2.21 & 156.0 & \textbf{48.7} & 2.61 & 126.7 & 50.1 & 2.04 & 81.9 & 61.7 & 2.58 & 128.9 & 59.5 & 2.36 & 123.4 \\
dParallel & \textbf{82.1} & 3.02 & 245.7 & 38.7 & 2.94 & 77.9 & 55.4 & 2.24 & 108.0 & 54.3 & 2.57 & 98.8 & 57.6 & 2.69 & 132.6 \\
d3LLM & \underline{81.4} & \textbf{4.94} & \textbf{391.3} & 38.2 & 3.92 & 97.5 & \underline{55.6} & \textbf{2.96} & \textbf{141.4} & 57.1 & \textbf{3.20} & 129.5 & 58.1 & \underline{3.76} & \underline{189.9} \\
\rowcolor{gray!15}
\textbf{TAD-Q} & \underline{81.4} & 3.26 & 255.6 & \underline{42.8} & \underline{3.96} & \underline{144.2} & \textbf{56.8} & 2.32 & \underline{122.8} & \textbf{64.1} & 2.35 & \underline{134.3} & \textbf{61.3} & 2.97 & 164.2 \\
\rowcolor{gray!15}
\textbf{TAD-S} & 81.0 & \underline{4.59} & \underline{359.5} & 41.3 & \textbf{5.30} & \textbf{161.3} & 53.8 & \underline{2.38} & 112.9 & \underline{62.2} & \underline{2.90} & \textbf{147.5} & \underline{59.6} & \textbf{3.79} & \textbf{195.3} \\
\bottomrule
\end{tabular}%
}
\end{table*}

    

\subsection{Main Results}
\textbf{Results on LLaDA Model.}
Table \ref{tab:main_results} demonstrates that the TAD framework successfully improve the accuracy-parallelism trade-off across two flexible configurations. Across all benchmarks, the Quality model (TAD-Q) achieves the highest average accuracy of 51.6\%, substantially outperforming the 46.2\% baseline average. On the complex MATH dataset where baselines typically degrade, TAD-Q improves accuracy to 42.7\% while decoding 4.49 tokens per forward (TPF). The Speed model (TAD-S) attains the highest average AUP score of 257.1. On GSM8K-CoT, TAD-S reaches 8.47 TPF with 78.8\% accuracy, achieving a better trade-off compared to d3LLM~\cite{qian2026d3llm} and dParallel~\cite{chen2025dparallel}. Both models also enhance code generation, with TAD-Q surpassing the baseline HumanEval accuracy while decoding approximately 6 tokens per forward. These results confirm the effectiveness of our trajectory collection mechanism and temporal-aware distillation framework.


\textbf{Results on Dream Model.}
Applying TAD to the Dream architecture (Table \ref{tab:TAD-dream-wide}) confirms its strong generalization. TAD-Q achieves the highest average accuracy of 61.3\%, peaking at 64.1\% on HumanEval to significantly outperform the original model. On MATH, TAD-Q maintains a strong 42.8\% accuracy while TAD-S achieves the peak AUP score of 161.3 at 5.30 TPF. Both configurations avoid the severe performance drops on GSM8K-CoT and MBPP typical of training-free methods. Although baselines like d3LLM occasionally exhibit marginally higher raw throughput, TAD-S delivers the highest average AUP score of 195.3. These results confirm that our TAD framework improves the accuracy-parallelism trade-off.

\begin{table}[h]
\centering
\caption{Ablation study on the distillation objectives. `Hard CE' refers to cross-entropy with reference tokens from trajectories, and `Soft KL' refers to KL divergence with teacher distributions.}
\label{tab:ablation_loss}
\begin{tabular}{c c | c c c | c c c}
\toprule
\multicolumn{2}{c|}{\textbf{Objectives}} & \multicolumn{3}{c|}{\textbf{MATH}} & \multicolumn{3}{c}{\textbf{HumanEval}} \\
Near & Distant & \textbf{TPF} $\uparrow$ & \textbf{Acc} $\uparrow$ & \textbf{AUP} $\uparrow$ & \textbf{TPF} $\uparrow$ & \textbf{Acc} $\uparrow$ & \textbf{AUP} $\uparrow$ \\
\midrule
Hard CE & Hard CE & 4.34 & 40.3 & \underline{159.1} & 5.37 & 32.9 & 72.1 \\
Soft KL & Soft KL & 2.08 & \underline{41.4} & 90.0 & 2.90 & 35.4 & 55.4 \\
Hard CE & None    & \textbf{5.12} & 34.8 & 142.9  & \textbf{6.47} & \underline{39.6} & \underline{110.7} \\
\midrule
\rowcolor{gray!10} 
\textbf{Hard CE} & \textbf{Soft KL} & \underline{4.49} & \textbf{42.7} & \textbf{190.4} & \underline{6.11} & \textbf{42.1} & \textbf{117.4} \\
\bottomrule
\end{tabular}
\end{table}

\begin{table}[h]
\centering
\caption{Ablation on data collection methods and distillation strategies. We report the average Accuracy, TPF, and AUP score across the four evaluated benchmarks.}

\label{tab:ablation_data}
\resizebox{\textwidth}{!}{%
\begin{tabular}{l c c c}
\toprule
\textbf{Data Source and Training Strategy} & \textbf{Acc (\%)} & \textbf{TPF} & \textbf{AUP} \\
\midrule
Standard SFT (Ground Truth + Random Mask) & 46.2 & 3.24 & 105.8 \\
Trajectory (with privileged information) + Random Mask & 47.9 & 4.26 & 152.8 \\
Trajectory (w/o privileged information) + TAD Distillation & 47.9 & 5.03 & 187.1 \\
Trajectory (with privileged information) + TAD Distillation (Ours) & \textbf{51.6} & \textbf{5.08} & \textbf{225.2} \\
\bottomrule
\end{tabular}
}
\end{table}

\subsection{Ablation Study}
We conduct ablation studies on the LLaDA-8B architecture to validate the design choices within the TAD framework.

\textbf{Effect of Decoupled Distillation Objectives.}
Table \ref{tab:ablation_loss} evaluates the individual objective components using a fixed window of $\delta=8$. Applying hard cross-entropy globally forces early confidence on distant tokens, lowering HumanEval accuracy to 32.9\%. Conversely, global soft KL divergence lacks deterministic targets to anchor the generation path, resulting in over-smoothed predictions and minimal acceleration (2.08 TPF on MATH). Restricting hard cross-entropy solely to the near subset accelerates generation but reduces MATH accuracy to 34.8\%, demonstrating the necessity of distant supervision. Combining near-term hard targets with distant soft supervision optimally resolves this problem, yielding the highest accuracy and AUP scores across both benchmarks.

\textbf{Impact of Privileged Information-Guided Trajectories.}
Table \ref{tab:ablation_data} evaluates four training paradigms to confirm the necessity of privileged information. Standard supervised fine-tuning with random masking ignores sequential dependencies and yields the lowest performance. Applying random masking to the final text of the privileged information-guided trajectory improves accuracy but still omits natural state transitions. Distilling from valid trajectories generated without privileged information increases TPF but limits accuracy. In contrast, distilling from trajectories generated with privileged ground-truth context achieves the highest average accuracy (51.6\%) and optimal AUP score (225.2), confirming that these paths provide essential high-quality targets.

\textbf{Sensitivity to the Spatial Partition Window ($\delta$).}
Table \ref{tab:delta_t_performance} presents average performance under varying window sizes. A conservative window ($\delta=4$) yields the highest accuracy (52.1\%) but restricts decoding speed. Expanding $\delta$ to 20 provides deterministic supervision to a larger sequence portion, accelerating generation to 5.76 TPF and achieving the peak AUP score (257.1) with only a minor accuracy decline. However, an extreme window ($\delta=256$) mimics global cross-entropy, compelling the model to predict distant tokens without sufficient context and severely degrading performance. These observations justify our dual-model strategy, assigning a moderate window for robust reasoning (Quality model) and a larger window for maximized throughput (Speed model).

\begin{table}[htbp]
\centering
\begin{minipage}[t]{0.48\textwidth}
\centering
\caption{Average performance across four benchmarks under varying partition window sizes ($\delta$).}
\label{tab:delta_t_performance}
\begin{tabular}{lccc}
\toprule
$\delta$ & \textbf{Acc (\%)}  & \textbf{TPF} & \textbf{AUP} \\
\midrule
$\delta=4$  & \textbf{52.1} & 4.33 & 190.3 \\
$\delta=8$  & 51.6 & 5.08 & 225.2 \\
$\delta=12$ & 49.9 & 4.64 & 196.4 \\
$\delta=16$ & 48.9 & 5.57 & 229.2 \\
$\delta=20$ & 49.9 & 5.76 & \textbf{257.1} \\
$\delta=24$ & 48.9 & \textbf{5.78} & 237.1 \\
$\delta=256~(\text{Global CE})$  & 46.3 & 5.10 & 208.4 \\
\bottomrule
\end{tabular}
\end{minipage}\hfill
\begin{minipage}[t]{0.48\textwidth}
\centering
\caption{Average performance across four benchmarks under varying KL weights ($\lambda$).}
\label{tab:lambda_performance}
\begin{tabular}{lccc}
\toprule
$\lambda$ & \textbf{Acc (\%)} & \textbf{TPF} & \textbf{AUP} \\
\midrule
$\lambda=0.0$ & 47.2 & \textbf{5.35} & 192.1 \\
$\lambda=0.5$ & 51.1 & 4.90 & 215.9 \\
$\lambda=1.0$ & \textbf{51.6} & 5.08 & \textbf{225.2} \\
$\lambda=1.5$ & 50.5 & 4.89 & 207.5 \\
$\lambda=2.0$ & 50.4 & 4.74 & 199.5 \\
\bottomrule
\end{tabular}
\end{minipage}
\end{table}

\textbf{Sensitivity to the weight of $\mathcal{L}_{distant}$ ($\lambda$).}
Table \ref{tab:lambda_performance} evaluates the balance between near cross-entropy and distant Kullback-Leibler divergence. Removing distant supervision ($\lambda=0$) maximizes speed (5.35 TPF) but causes the lowest average accuracy (47.2\%), confirming that omitting distant dependencies damages generation quality. Integrating the distant constraint at $\lambda=1.0$ restores reasoning capabilities, achieving the optimal average accuracy (51.6\%) and AUP score (225.2). Increasing the weight further ($\lambda \geq 1.5$) overemphasizes the soft objective, reducing the impact of near-term certainty forcing and subsequently decreasing.


\section{Related Work}

\subsection{Diffusion Large Language Models}
Recent research has extended diffusion modeling from continuous domains~\cite{croitoru2023diffusion} to discrete text generation~\cite{austin2021structured, sahoo2024simple, lou2024discrete, shi2024simplified}. Unlike traditional autoregressive models that rely on left-to-right sequential generation~\cite{achiam2023gpt, guo2025deepseek, yang2025qwen3}, Diffusion Large Language Models (dLLMs) feature bidirectional context attention and parallel decoding capabilities~\cite{li2025survey}. Recent models, including the LLaDA series~\cite{nie2025large, zhu2025llada, bie2025llada2, bie2026llada2}, Dream~\cite{ye2025dream}, and SDAR~\cite{cheng2025sdar}, achieve performance competitive with leading autoregressive models across various benchmarks. They also demonstrate advantages in reverse reasoning tasks that require global planning~\cite{nie2025large}. Beyond these developments, the research community has increasingly focused on enhancing reasoning capabilities~\cite{zhao2025d1,wang2025revolutionizing,tang2025wd1,ou2025principled,liu2026efficient}, building agent systems~\cite{zhen2026dllm,zhao2026dllm}, and accelerating inference~\cite{lin2026efficient} for dLLMs. In this paper, we focus on further accelerating dLLM inference by increasing the parallelism of these models.

\subsection{Inference Acceleration for dLLMs}
The inference speed of dLLMs is primarily hindered by the incompatibility of traditional KV caching with bidirectional attention and the severe quality degradation during highly parallel decoding~\cite{wu2025fast}. To alleviate the caching bottleneck, recent studies~\cite{liu2025dllm,ma2025dkv,jiang2025d,wu2025fast} exploit the temporal consistency of KV states across decoding iterations to develop approximate caching mechanisms, significantly reducing redundant computations. To enhance parallelism, current approaches are categorized into training-free~\cite{wu2025fast,hong2025wide,xu2025lopa,shen2025improving,wu2025free,wang2025creditdecoding} and training-based~\cite{chen2025dparallel,wang2025diffusion,kim2025cdlm,qian2026d3llm,zhang2026t3d,liang2026cd4lm,bao2025learning,hu2026lightningrl} methods. Training-free strategies accelerate inference by dynamically adapting decoding schedules, but their effectiveness is bounded by the capacity of the model. Alternatively, training-based methods finetune the model for parallel generation. While they achieve higher throughput, this acceleration typically comes at the expense of generation quality. Building upon the training-based paradigm, our work improves this trade-off through a privileged-information strategy to acquire high-quality trajectories and a temporal-aware distillation framework.

\section{Limitations}
\label{sec: limitations}
As a training-based method, TAD mainly has three limitations. First, the framework depends on high-quality ground-truth responses to generate trajectory, which restricts its applicability in unsupervised settings. Second, the token-by-token rollout during trajectory collection introduces overhead prior to distillation. Third, the partition window $\delta$
requires empirical tuning across architectures. We leave dynamic window sizing and data-efficient trajectory generation for future work.

\section{Conclusion}
\label{sec: conclusion}
We present TAD, a temporal-aware trajectory self-distillation framework that improves the accuracy-parallelism trade-off. The framework collects high-quality trajectories via a teacher conditioned on privileged information and partitions masked positions by their decoding steps, applying cross-entropy to near tokens for throughput and KL divergence loss to distant tokens for dependency preservation. Experiments on mathematical reasoning and code generation confirm that this design improves both accuracy and decoding speed, offering a practical path toward deploying efficient dLLMs.

\medskip

{
\small
\bibliographystyle{unsrt} 

\bibliography{custom} 
}


\appendix

\section{Proof of Theoretical Analysis}

\subsection{Equivalence of Joint KL Divergence and Expected Cross-Entropy}
\label{sec:proof}

We prove that minimizing the Kullback-Leibler divergence between the teacher's sequential joint distribution and the student's factorized distribution is equivalent to minimizing the expected sum of token-wise cross-entropies along the decoding trajectory. This result formally connects the theoretical objective in Eq.~\ref{eq:theoretical_obj} of the main text to the trainable objective in Eq.~\ref{eq:simplified_obj}.

\paragraph{Notation.} Throughout this proof, we adopt the same notation as Section 3.1. Let $\mathbf{x}_t$ denote a teacher-generated intermediate masked state, $q$ the prompt, $\mathcal{M} = \{m_1, \ldots, m_K\}$ the ordered set of masked positions revealed by the teacher's token-by-token rollout, and $\mathbf{x}_0^{\mathcal{M}} = (x_0^{m_1}, \ldots, x_0^{m_K})$ the corresponding clean tokens. The teacher's TBT joint distribution and the student's factorized distribution are
\begin{align}
p_{\theta_T}^{TBT}(\mathbf{x}_0^{\mathcal{M}} \mid \mathbf{x}_t, q) &= \prod_{k=1}^{K} p_{\theta_T}(x_0^{m_k} \mid \mathbf{x}_t, x_0^{m_1}, \ldots, x_0^{m_{k-1}}, q), \label{eq:teacher_joint} \\
p_{\theta_S}(\mathbf{x}_0^{\mathcal{M}} \mid \mathbf{x}_t, q) &= \prod_{k=1}^{K} p_{\theta_S}(x_0^{m_k} \mid \mathbf{x}_t, q), \label{eq:student_factorized}
\end{align}
For brevity, we abbreviate $\mathbf{x}_0^{m_k}$ as $x_k$ and $(x_0^{m_1}, \ldots, x_0^{m_{k-1}})$ as $x_{<k}$ in the derivation below.

\begin{theorem}
\label{thm:equivalence}
Under the factorization in Eq.~\ref{eq:student_factorized}, for any fixed intermediate state $\mathbf{x}_t$, minimizing the joint KL divergence
\begin{equation}
\min_{\theta_S}\; D_{\mathrm{KL}}\!\left( p_{\theta_T}^{TBT}(\mathbf{x}_0^{\mathcal{M}} \mid \mathbf{x}_t, q) \,\big\|\, p_{\theta_S}(\mathbf{x}_0^{\mathcal{M}} \mid \mathbf{x}_t, q) \right)
\label{eq:thm_kl}
\end{equation}
is equivalent to minimizing the expected sum of token-wise cross-entropies along the teacher's decoding path:
\begin{equation}
\min_{\theta_S}\; \sum_{k=1}^{K} \mathbb{E}_{x_{<k} \sim p_{\theta_T}^{TBT}}\!\left[ -\sum_{x_k} p_{\theta_T}(x_k \mid \mathbf{x}_t, x_{<k}, q)\, \log p_{\theta_S}(x_k \mid \mathbf{x}_t, q) \right].
\label{eq:thm_ce}
\end{equation}
\end{theorem}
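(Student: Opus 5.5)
The plan is to expand the KL divergence in Eq.~\ref{eq:thm_kl} as cross-entropy minus entropy, $D_{\mathrm{KL}}(P\|Q) = \mathbb{E}_{P}[-\log Q] - H(P)$, with $P = p_{\theta_T}^{TBT}(\cdot \mid \mathbf{x}_t, q)$ and $Q = p_{\theta_S}(\cdot \mid \mathbf{x}_t, q)$. The entropy term $H(P)$ depends only on the teacher parameters $\theta_T$, and $\theta_T$ is fixed (it is not an optimization variable). So $H(P)$ is a constant in $\theta_S$, and minimizing the KL over $\theta_S$ has the same set of minimizers as minimizing the cross-entropy $\mathbb{E}_{\mathbf{x}_0^{\mathcal{M}} \sim P}[-\log Q(\mathbf{x}_0^{\mathcal{M}})]$. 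Equivalence is therefore meant in the sense of identical argmins, with objectives that differ by a $\theta_S$-independent constant. I would state explicitly that $H(P)$ is finite, which is automatic since the vocabulary is finite.

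Next I substitute the student factorization Eq.~\ref{eq:student_factorized}. The log turns the product into a sum:
\[
-\log Q(\mathbf{x}_0^{\mathcal{M}}) = -\sum_{k=1}^K \log p_{\theta_S}(x_k \mid \mathbf{x}_t, q).
\]
By linearity of expectation, the cross-entropy becomes $\sum_{k=1}^K \mathbb{E}_{(x_1,\dots,x_K)\sim P}[-\log p_{\theta_S}(x_k \mid \mathbf{x}_t, q)]$.

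The key step is to reduce each summand to an expectation over only $x_{\le k}$. The $k$-th integrand depends only on $x_k$. So I marginalize out $x_{k+1},\dots,x_K$ using the chain-rule form Eq.~\ref{eq:teacher_joint}: summing the trailing conditionals over their arguments gives $1$, one at a time from $k'=K$ down to $k+1$. This leaves the marginal of $x_{\le k}$ under $P$, which equals $P(x_{<k})\, p_{\theta_T}(x_k \mid \mathbf{x}_t, x_{<k}, q)$. Applying the tower property, I write the summand as $\mathbb{E}_{x_{<k}\sim P}\big[\sum_{x_k} p_{\theta_T}(x_k\mid \mathbf{x}_t,x_{<k},q)(-\log p_{\theta_S}(x_k\mid\mathbf{x}_t,q))\big]$. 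This is exactly the $k$-th term of Eq.~\ref{eq:thm_ce}.

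The only delicate point is this marginalization and conditioning step, together with the convention that the order $m_1,\dots,m_K$ is fixed for the given state $\mathbf{x}_t$. If the order were random and depended on the sampled tokens, I would instead condition on the realized path. The sum-to-one argument still goes through because each conditional is a normalized distribution. I would also note the edge case where the student assigns zero probability to a token in the teacher's support. There both sides are $+\infty$, so the equivalence still holds.
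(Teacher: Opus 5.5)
Your proposal is correct and follows essentially the same route as the paper's proof: drop the $\theta_S$-independent teacher entropy, use the student factorization to split the log-likelihood into a sum, and reduce each summand via the law of total expectation (your explicit sum-to-one marginalization of $x_{>k}$ is exactly the paper's step of collapsing the innermost conditional expectation). Your extra remarks on the finiteness of the entropy, the fixed decoding order, and the zero-probability edge case are sound refinements that the paper leaves implicit.
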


\begin{proof}
By the definition of KL divergence,
\begin{equation}
D_{\mathrm{KL}}\!\left( p_{\theta_T}^{TBT} \,\|\, p_{\theta_S} \right) = \mathbb{E}_{\mathbf{x}_0^{\mathcal{M}} \sim p_{\theta_T}^{TBT}}\!\left[ \log p_{\theta_T}^{TBT}(\mathbf{x}_0^{\mathcal{M}} \mid \mathbf{x}_t, q) - \log p_{\theta_S}(\mathbf{x}_0^{\mathcal{M}} \mid \mathbf{x}_t, q) \right].
\label{eq:kl_def}
\end{equation}
Since the first term does not depend on $\theta_S$, it acts as a constant in the optimization. Minimizing Eq.~\ref{eq:thm_kl} is therefore equivalent to minimizing the cross-entropy
\begin{equation}
\min_{\theta_S}\; - \mathbb{E}_{\mathbf{x}_0^{\mathcal{M}} \sim p_{\theta_T}^{TBT}}\!\left[ \log p_{\theta_S}(\mathbf{x}_0^{\mathcal{M}} \mid \mathbf{x}_t, q) \right].
\label{eq:ce_only}
\end{equation}
Applying the factorization in Eq.~\ref{eq:student_factorized}, the joint log-probability decomposes into a sum of marginal log-probabilities:
\begin{equation}
\log p_{\theta_S}(\mathbf{x}_0^{\mathcal{M}} \mid \mathbf{x}_t, q) = \sum_{k=1}^{K} \log p_{\theta_S}(x_k \mid \mathbf{x}_t, q).
\label{eq:log_decomp}
\end{equation}
Substituting Eq.~\ref{eq:log_decomp} into Eq.~\ref{eq:ce_only} and exchanging the finite sum with the expectation,
\begin{equation}
\min_{\theta_S}\; \sum_{k=1}^{K} \left( - \mathbb{E}_{\mathbf{x}_0^{\mathcal{M}} \sim p_{\theta_T}^{TBT}}\!\left[ \log p_{\theta_S}(x_k \mid \mathbf{x}_t, q) \right] \right).
\label{eq:swap_sum}
\end{equation}
For each index $k$, the integrand $\log p_{\theta_S}(x_k \mid \mathbf{x}_t, q)$ depends only on $x_k$. By the law of total expectation, we marginalize the joint expectation over $x_{<k}$, $x_k$, and $x_{>k}$ in turn:
\begin{equation}
\mathbb{E}_{\mathbf{x}_0^{\mathcal{M}} \sim p_{\theta_T}^{TBT}}\!\left[ \log p_{\theta_S}(x_k \mid \mathbf{x}_t, q) \right] = \mathbb{E}_{x_{<k}}\!\left[ \mathbb{E}_{x_k \mid x_{<k}}\!\left[ \mathbb{E}_{x_{>k} \mid x_{\leq k}}\!\left[ \log p_{\theta_S}(x_k \mid \mathbf{x}_t, q) \right] \right] \right],
\label{eq:total_exp}
\end{equation}
where all conditional distributions on the right-hand side are induced by $p_{\theta_T}^{TBT}$. Because $\log p_{\theta_S}(x_k \mid \mathbf{x}_t, q)$ is constant with respect to $x_{>k}$, the innermost expectation reduces to $\log p_{\theta_S}(x_k \mid \mathbf{x}_t, q)$ itself. Expanding the remaining expectation over $x_k$,
\begin{equation}
\mathbb{E}_{\mathbf{x}_0^{\mathcal{M}} \sim p_{\theta_T}^{TBT}}\!\left[ \log p_{\theta_S}(x_k \mid \mathbf{x}_t, q) \right] = \mathbb{E}_{x_{<k} \sim p_{\theta_T}^{TBT}}\!\left[ \sum_{x_k} p_{\theta_T}(x_k \mid \mathbf{x}_t, x_{<k}, q)\, \log p_{\theta_S}(x_k \mid \mathbf{x}_t, q) \right].
\label{eq:inner_expand}
\end{equation}
Substituting Eq.~\ref{eq:inner_expand} back into Eq.~\ref{eq:swap_sum} yields
\begin{equation}
\min_{\theta_S}\; \sum_{k=1}^{K} \mathbb{E}_{x_{<k} \sim p_{\theta_T}^{TBT}}\!\left[ -\sum_{x_k} p_{\theta_T}(x_k \mid \mathbf{x}_t, x_{<k}, q)\, \log p_{\theta_S}(x_k \mid \mathbf{x}_t, q) \right],
\label{eq:final_form}
\end{equation}
which matches Eq.~\ref{eq:thm_ce} and completes the proof.
\end{proof}

\paragraph{Remark.} Theorem~\ref{thm:equivalence} formalizes the key insight underlying the TAD framework: the student's per-position marginal predictions must match the teacher's conditional probabilities along the sampled trajectory in order to internalize the sequential dependencies that a fully factorized model would otherwise miss. This conditional matching is exactly what motivates the hard cross-entropy loss on near tokens (Section~\ref{sec:self_distillation}), where the teacher's conditional distribution becomes sharply peaked and is well approximated by a one-hot label.

\begin{algorithm}[htbp]
\caption{Temporal-Aware Trajectory Self-Distillation (TAD)}
\label{alg:tad}
\textbf{Input:} Trajectory dataset $\mathcal{D} = \{ (q, a, \tau_{priv}) \}$, temporal partition window $\delta$, temperature $\tau$, loss weights $\lambda$. \\
\textbf{Parameters:} Trainable student model $\theta_S$, frozen teacher model $\theta_T$. \\
\textbf{Output:} Optimized student model $\theta_S^*$.
\begin{algorithmic}[1]
\WHILE{not converged}
    \STATE Sample a batch of tuples $(q, a, \tau_{priv})$ from dataset $\mathcal{D}$
    \STATE Sample a decoding step $s$ and obtain the intermediate masked state $x_s$ from $\tau_{priv}$
    \STATE Obtain the look-ahead target state $x_{target} \leftarrow x_{s+\delta}$ from $\tau_{priv}$
    
    \STATE \textit{\# 1. Privilege-Aware Input Construction}
    \STATE Construct student input: $I_S \leftarrow \text{Concat}(q, x_s)$
    \STATE Construct teacher input: $I_T \leftarrow \text{Concat}(q, a, x_s)$
    
    \STATE \textit{\# 2. Spatial Partitioning of Masked Tokens}
    \STATE Initialize near subset $\mathcal{M}_{near} \leftarrow \emptyset$, distant subset $\mathcal{M}_{distant} \leftarrow \emptyset$
    \STATE Initialize hard label sequence $Y \leftarrow \emptyset$
    \FOR{each position $i$ in $x_s$}
        \IF{$x_s^i == \text{[MASK]}$ \AND $x_{target}^i \neq \text{[MASK]}$}
            \STATE $\mathcal{M}_{near} \leftarrow \mathcal{M}_{near} \cup \{i\}$ \COMMENT{Near tokens to be decoded within $\delta$ steps}
            \STATE $Y^i \leftarrow x_{target}^i$ 
        \ELSIF{$x_s^i == \text{[MASK]}$ \AND $x_{target}^i == \text{[MASK]}$}
            \STATE $\mathcal{M}_{distant} \leftarrow \mathcal{M}_{distant} \cup \{i\}$ \COMMENT{Distant tokens remaining masked}
        \ENDIF
    \ENDFOR
    
    \STATE \textit{\# 3. Model Forward Pass}
    \STATE $L_S \leftarrow \theta_S(I_S)$ \COMMENT{Student marginal logits}
    \STATE $L_T \leftarrow \theta_T(I_T)$ \COMMENT{Teacher conditional logits (no\_grad)}
    
    \STATE \textit{\# 4. Decoupled Objective Computation}
    \STATE $\mathcal{L}_{near} \leftarrow \text{CrossEntropy}(L_S[\mathcal{M}_{near}], Y[\mathcal{M}_{near}])$ 
    
    \STATE $P_S \leftarrow \text{LogSoftmax}(L_S[\mathcal{M}_{distant}] / \tau)$
    \STATE $P_T \leftarrow \text{Softmax}(L_T[\mathcal{M}_{distant}] / \tau)$
    \STATE $\mathcal{L}_{distant} \leftarrow \tau^2 \cdot \text{KLDiv}(P_S, P_T)$ 
    
    \STATE $\mathcal{L}_{TAD} \leftarrow  \mathcal{L}_{near} + \lambda \mathcal{L}_{distant}$
    
    \STATE \textit{\# 5. Optimization}
    \STATE Update student parameters: $\theta_S \leftarrow \theta_S - \eta \nabla_{\theta_S} \mathcal{L}_{TAD}$
\ENDWHILE
\end{algorithmic}
\end{algorithm}

\section{Algorithm}
In this section, we describe the training algorithm of TAD. Algorithm~\ref{alg:tad} provides the pseudocode of the full training procedure.

\section{More Implementation Details}
\label{sec:implementation_details}
\subsection{Training Details}

We apply TAD to two dLLMs: LLaDA-8B-Instruct and Dream-7B-Instruct.
Both models are fine-tuned using LoRA with DeepSpeed ZeRO-Stage 2 on 8$\times$H200 GPUs,
using bfloat16 mixed precision throughout. The total training time is 20 hours for each model. The detailed training hyperparameters are summarized in Table~\ref{tab:training_hyperparams}.

\paragraph{Sequence Length.}
We do not impose a fixed maximum sequence length during training.
Instead, each training sample is constructed by concatenating the tokenized prompt
with the trajectory token sequence from the corresponding diffusion step,
and sequences within each mini-batch are dynamically right-padded to the longest sample in that batch. Padding positions are excluded from all loss computations via label masking.

\begin{table}[h]
\centering
\caption{Training hyperparameters for TAD.}
\label{tab:training_hyperparams}
\begin{tabular}{lcc}
\toprule
\textbf{Hyperparameter} & \textbf{TAD-LLaDA} & \textbf{TAD-Dream} \\
\midrule
\multicolumn{3}{l}{\textit{Training}} \\
Optimizer & AdamW & AdamW \\
Learning Rate & $2 \times 10^{-5}$ & $2 \times 10^{-5}$ \\
LR Scheduler & Constant & Constant \\
Warmup Steps & 0 & 0 \\
Weight Decay & 0.01 & 0.01 \\
Max Gradient Norm & 1.0 & 1.0 \\
Training Epochs & 1 & 1 \\
Per-Device Batch Size & 4 & 4 \\
Gradient Accumulation Steps & 8 & 8 \\
Global Batch Size & 256 & 256 \\
Precision & bfloat16 & bfloat16 \\
DeepSpeed Stage & ZeRO-2 & ZeRO-2 \\
Max Sequence Length & Dynamic & Dynamic \\
\midrule
\multicolumn{3}{l}{\textit{LoRA}} \\
Rank ($r$) & 128 & 128 \\
Alpha ($\alpha$) & 128 & 128 \\
Dropout & 0.05 & 0.05 \\
Bias & None & None \\
Target Modules & \multicolumn{2}{c}{\texttt{q,k,v,o,gate,up,down}\_proj} \\
\midrule
\multicolumn{3}{l}{\textit{Distillation}} \\
$\lambda$ & 1.0 & 1.0 \\
Temperature ($\tau$) & 1.0 & 1.0 \\
\bottomrule
\end{tabular}
\end{table}

\subsection{Evaluation Details}
\label{appendix:eval_details}

\subsubsection{Evaluation Metrics}

We adopt the AUP (Accuracy Under Parallelism)~\cite{qian2026d3llm} score as our primary evaluation metric. AUP is defined as a weighted area under the accuracy--parallelism curve, where parallelism is measured by Tokens Per Forward pass (TPF). Given a set of parallelism-accuracy pairs $S = \{(\rho_i, y_i)\}_{i=1}^{m}$ sorted by increasing parallelism $\rho_1 < \rho_2 < \cdots < \rho_m$, AUP is computed as:
\begin{equation}
\mathrm{AUP} \triangleq \rho_1 y_1 + \sum_{i=2}^{m} (\rho_i - \rho_{i-1}) \left( \frac{y_i W(y_i) + y_{i-1} W(y_{i-1})}{2} \right),
\end{equation}
where the weighting function $W(y) = \min(e^{-\alpha(1 - y / y_{\max})}, 1)$ penalizes accuracy degradation relative to the best accuracy $y_{\max}$ achieved on that task. We use $\alpha = 3$ as the default penalty factor. A minimum accuracy threshold $y_{\min} = y_1 - 5$ is applied to exclude regimes of severe accuracy degradation. AUP rewards methods that increase parallelism without sacrificing accuracy, while suppressing contributions from low-accuracy regimes. Importantly, AUP is hardware-independent since it relies on TPF rather than tokens per second (TPS), providing a fair comparison of algorithmic parallelism across different hardware setups.

\subsubsection{Evaluation Configurations}

We evaluate on four downstream benchmarks covering math reasoning and code generation. All evaluations are conducted using the \texttt{lm-evaluation-harness} framework. Due to differences in model architecture and instruction-following capabilities, TAD-LLaDA and TAD-Dream use slightly different task variants, as summarized in Table~\ref{tab:eval-config}.
During inference, we set the maximum generation length to 256 tokens for all tasks. We use greedy decoding with temperature set to 0.0. The block size is fixed at 32 tokens for all experiments. For our TAD framework with multi-block generation, the block-add threshold is set to 0.1, and the decoded token threshold is 0.95.
\begin{table}[h]
\centering
\caption{Evaluation configurations for TAD-LLaDA and TAD-Dream.}
\label{tab:eval-config}
\resizebox{\textwidth}{!}{%
\begin{tabular}{llcccccc}   
\toprule
\textbf{Model} & \textbf{Benchmark} & \textbf{Few-shot} & \textbf{Gen Length} & \textbf{Block Size} & \textbf{Entropy threshold} \\
\midrule
\multirow{4}{*}{TAD-LLaDA}
 & GSM8K              & 0-shot & 256 & 32 & 0.5   \\
 & MATH (Minerva)     & 4-shot & 256 & 32 & 0.5   \\
 & HumanEval          & 0-shot & 256 & 32 & 0.5   \\
 & MBPP               & 3-shot & 256 & 32 & 0.5   \\
\midrule
\multirow{4}{*}{TAD-Dream}
 & GSM8K & 0-shot & 256 & 32 & 0.5   \\
 & MATH (Minerva)         & 4-shot & 256 & 32 & 0.5   \\
 & HumanEval-Instruct     & 0-shot & 256 & 32 & 0.5   \\
 & MBPP-Instruct          & 0-shot & 256 & 32 & 0.5   \\
\bottomrule
\end{tabular}%
}
\end{table}

\section{More Experiment Results}
\subsection{Impact of Privileged Information on Trajectory Collection}
\label{additional_experiment_results}

In this section, we investigate the quality of the intermediate trajectories collected during the data construction phase. Specifically, we evaluate the generative accuracy of the teacher models under two distinct rollout conditions: (1) standard self-generation, where the model is conditioned solely on the input question without access to the ground truth (w/o GT), and (2) generation with privileged information, where the model receives both the input question and the ground-truth response. 

As presented in Table \ref{tab:model_performance}, the inclusion of ground-truth information yields a substantial and consistent improvement in trajectory accuracy across all evaluated benchmarks for both LLaDA-8B-Instruct and Dream-7B-Instruct architectures. For instance, on the GSM8K Training Set, the privileged information guidance improves LLaDA's accuracy from 68.88\% to 89.16\%, and Dream's from 75.17\% to 94.79\%. This performance confirms that leveraging ground truth as privileged information is beneficial to sample the high-quality intermediate states required for effective temporal-aware self-distillation.

\begin{table}[h]
    \centering
    \caption{Performance comparison of dLLMs with and without Ground Truth (GT).}
    \label{tab:model_performance}
    \begin{tabular}{l l c c c}
        \toprule
        \textbf{Model} & \textbf{Condition} & \textbf{GSM8K}  & \textbf{KodCode} \\
        \midrule
        \multirow{2}{*}{LLaDA-8B-Instruct} 
            & w/o GT   & 68.88  & 8.22 \\
            & with GT  & \textbf{89.16}  & \textbf{77.28} \\
        \midrule
        \multirow{2}{*}{Dream-7B-Instruct} 
            & w/o GT   & 75.17  & 14.05 \\
            & with GT  & \textbf{94.79}  & \textbf{75.08} \\
        \bottomrule
    \end{tabular}
\end{table}

\subsection{Throughput Analysis}
To further validate the practical deployment efficiency of the TAD framework, we conduct a wall-clock throughput analysis on the GSM8K-CoT benchmark using NVIDIA H200 GPUs. We measure the generation speed in tokens per second (TPS) and compare it against the base models and strong acceleration baselines. The results for LLaDA and Dream architectures are presented in Table~\ref{tab:tad_combined}.

\textbf{Results on LLaDA Architecture}
As shown in Table~\ref{tab:tad_combined}, TAD demonstrates a remarkable improvement over the baseline and existing acceleration methods on the LLaDA architecture. The Quality mode (TAD-LLaDA-Q) achieves 339.4 TPS, a 10.9-fold speedup over LLaDa, while simultaneously improving the accuracy from 72.6\% to 79.9\%. Furthermore, the Speed mode (TAD-LLaDA-S) maximizes hardware utilization, reaching a peak throughput of 451.8 TPS (a 14.5-fold acceleration) while maintaining a robust accuracy of 78.8\%. 

\textbf{Results on Dream Architecture}
The Dream base model inherently possesses a highly optimized initial state, achieving 83.9\% accuracy on GSM8K-CoT. As illustrated in Table \ref{tab:tad_combined}, all acceleration methods incur a minor accuracy penalty on this architecture. TAD-Dream-Q achieves 205.7 TPS (a 5.1-fold speedup) while preserving a highly competitive accuracy of 81.4\%. TAD-Dream-S pushes the throughput to 288.4 TPS (a 7.2-fold speedup) with a marginal drop to 81.0\% accuracy. We observe that while the absolute peak TPS of TAD-Dream-S is slightly lower than that of d3LLM~\cite{qian2026d3llm} (295.0 TPS), TAD matches its accuracy in the Quality mode. 

\begin{table}[htbp]
\centering
\caption{Throughput comparison of \textit{TAD-LLaDA} and \textit{TAD-Dream} on GSM8K-CoT using H200 GPUs. We report tokens per second (TPS) and accuracy (\%). Speedup ratios relative to the respective base models (LLaDA and Dream) are shown in parentheses.}
\label{tab:tad_combined}

\begin{subtable}[t]{0.48\linewidth}
    \centering
    \footnotesize 
    \setlength{\tabcolsep}{4pt} 
    \begin{tabular}{lcc}
        \toprule
        \textbf{Method} & \textbf{H200 TPS $\uparrow$} & \textbf{Acc (\%) $\uparrow$} \\
        \midrule
        LLaDA                & 31.2 ($1.0\times$)  & 72.6 \\
        Fast-dLLM-LLaDA      & 137.3 ($4.4\times$) & 74.7 \\
        D2F-LLaDA            & 121.1 ($3.9\times$) & 73.2 \\
        dParallel-LLaDA      & 214.3 ($6.9\times$) & 72.6 \\
        d3LLM-LLaDA          & 328.2 ($10.5\times$)& 73.1 \\
        TAD-LLaDA-Q          & 339.4 ($10.9\times$)& \textbf{79.9} \\
        TAD-LLaDA-S          & \textbf{451.8 ($14.5\times$)} & 78.8 \\
        \bottomrule
    \end{tabular}
    \label{tab:tad_llada}
\end{subtable}
\hfill 
\begin{subtable}[t]{0.48\linewidth}
    \centering
    \footnotesize
    \setlength{\tabcolsep}{4pt}
    \begin{tabular}{lcc}
        \toprule
        \textbf{Method} & \textbf{H200 TPS $\uparrow$} & \textbf{Acc (\%) $\uparrow$} \\
        \midrule
        Dream                & 40.1 ($1.0\times$)   & 83.9 \\
        Fast-dLLM-Dream      & 117.8 ($2.9\times$)  & 79.0 \\
        Fast-dLLM-v2         & 169.3 ($4.2\times$)  & 77.5 \\
        dParallel-Dream      & 189.4 ($4.7\times$)  & 82.1 \\
        d3LLM-Dream          & \textbf{295.0 ($7.4\times$)} & 81.4 \\
        TAD-Dream-Q          & 205.7 ($5.1\times$)  & 81.4 \\
        TAD-Dream-S          & 288.4 ($7.2\times$)  & 81.0 \\
        \bottomrule
    \end{tabular}
    \label{tab:tad_dream}
\end{subtable}

\end{table}

\end{document}